\newtheorem{definition}{Definition}
\newtheorem{lemma}{Lemma}
\newtheorem{corollary}{Corollary}
\newenvironment{proof}{\noindent{\bf Proof:~~}}{\qed}
\newcommand{\tuple}[1]{\ensuremath{\left \langle #1 \right \rangle }}
\newcommand{\comment}[1]{}
\newcommand{\qed}{\hfill\ensuremath{\blacksquare}}
\newcommand{\SAS}{SAS$^+$}
\newcommand{\cmfp}{safe model-free planning}
\newcommand{\eff}{\textit{eff}}
\newcommand{\pre}{\textit{pre}}
\newcommand{\solvable}{\textit{S}}
\newcommand{\plannable}{\textit{P}}
\newtheorem{theorem}{Theorem}
\title{Efficient, Safe, and Probably Approximately Complete Learning of Action Models}
\author{Roni Stern 
\\ Ben Gurion University of the Negev 
\\ Be'er Sheva, Israel
\\ roni.stern@gmail.com
\And Brendan Juba 
\\ Washington University in St. Louis
\\ St. Louis, MO, 63130 USA
\\ bjuba@wustl.edu
}
\begin{document}
	
	\maketitle
	
	\begin{abstract}
		In this paper we explore the theoretical boundaries of 
		planning in a setting where no model of the agent's actions is given. 
		Instead of an action model, a set of successfully executed plans are 
		given and the task is to generate a plan that is {\em safe}, i.e., guaranteed to achieve the goal without failing. 
		To this end, we show how to learn a conservative model of the world 
		in which actions are guaranteed to be applicable. 
		This conservative model is then given to an off-the-shelf classical planner, 
		resulting in a plan that is guaranteed to achieve the goal. 
		However, this reduction from a model-free planning to a 
		model-based planning is not complete: in some cases 
		a plan will not be found even when such exists. 
		We analyze the relation between the number of observed plans 
		and the likelihood that our conservative approach will indeed fail to solve a solvable problem. Our analysis show that the number of trajectories needed scales gracefully.
	\end{abstract}
	
	\section{Introduction}
	% Planning problems
	In classical planning problems, a model of the acting agent and its relationship to the relevant world is given in a formal planning description language, e.g., the classical STRIPS model~\cite{fikes1971strips} or PDDL~\cite{mcdermott1998pddl}. Planning algorithms (planners) use this model to generate a plan for achieving a given goal condition. 	
	% Domain modeling is hard and so people try to automated this
	Creating a planning domain model, however, is
	acknowledged as a notoriously hard knowledge engineering task.
	This has motivated much work on {\em learning} such knowledge.
	%creating a planning domain model for an agent, and in general knowledge engineering, is notoriously hard. % Missing a connection between these two sentence. Maybe ``For example?''
	%Lenat~\shortcite{lenat1995} gave a nice overview of the many challenges in knowledge engineering that were encountered during the CYC project.
	% Is this reference for general knowledge engineering or for planning?
	% General knowledge engineering -- I'm omitting it for now.
	
	% Past work in a nutshell: either learn a model, which is hard computationally, or learn to plan directly. 
	One such well-studied approach is to learn a domain model by observing the agent's interactions with the environment. The problems that arise in such approaches, however, are frequently intractable~\cite{kearns1994,daniely2016}. An alternative approach
	that is commonly used in {\em reinforcement learning} is to skip the model-learning phase and directly learn how to act by observing the agent's past actions and observations, and by guiding the agent towards performing exploratory actions~\cite[inter alia]{kearns2002}. In most prior work, the agent may fail to execute a planned action. When this occurs, the agent replans, possibly refining an underlying domain or action model it has learned. Thus, the agent learns from both positive and negative examples.

	%Thus, a well-studied approach is to learn a domain model by observing the agent's  interactions with the environment. , and by guiding the agent towards performing exploratory actions. A popular approach to planning without a domain model in {\em reinforcement learning} even skips the model-learning phase and directly learns how to act from observing the agent's past actions and observations, e.g.~\cite{kearns2002}. 
	
	% We are different: only positive examples, and we really want to succeed.
	In this work we address a different setting, in which such execution failures must be avoided. This setting is relevant when execution failures are very costly 
	or when the agent has limited computation capabilities, 
    and thus does not have the capability to re-plan after the plan it has tried to execute has failed.
    % Roni: these are two different motivations for our work: one is that the robot is too expensive and fragile to allow failures, and the other is that replanning during execution is not allowed.	
	Consider, for example, a team of nano-bots deployed inside a human body for medical target identification and drug delivery~\cite{cavalcanti2007nanorobot}. Re-planning is likely not possible in such nano-bots, and, of course, failing to cure the disease is undesirable. Thus, the planning task we focus on in this paper is how to find a plan that is {\em safe}, i.e., is guaranteed to achieve the goal, in a setting where a domain model is not available. We call this problem the {\em safe model-free planning problem}. 
	
		%Furthermore, an agent may have limited computational capabilities, and thus might not have the capability to re-plan after the plan it has tried to execute has failed. [[Roni: this is an alternative motivaiton, not another assumpiton we are making, so I differentiated the two]]

	% Our approach 
	Since performing actions that might fail is not allowed, exploration actions 
	cannot be performed. The only source of information available is a set of 
	{\em trajectories} of previously executed plans. 
	First, we show how to learn a set of actions that the agent can use from the given trajectories. 
	For every such action $a$, we bound the set of predicates that are $a$'s preconditions and effects. This bounded action model is then used  to construct a classical planning problem such that a solution to it is a solution to our model-free problem. This approach to solve the model-free planning problem 
	is sound and can be very efficient, since current classical planners are very efficient.   
	However, it is not complete, as the planning problem that uses the learned action model might not be solvable even if the underlying model-free planning problem is. 
	Nonetheless, we prove that under some assumptions, the probability of this occurring decreases quasi-linearly with the number of observed trajectories.

	%\roni{I moved this part from earlier in the intro to here for several reasons: (1) it is too technical to "hit" the reader with it so early, and (2) we want to get the reader "faster" to our problem and approach. I hope this is OK with you.]]
	This positive result comes in contrast to the hardness of other tasks related to model learning. 
	For example, learning to predict labels computed by finite-state machines~\cite{kearns1994} or even DNF formulas~\cite{daniely2016} is believed to be computationally intractable. Thus, we cannot hope to predict the values that fluents will take merely on the basis of the fact that these can be computed by a finite-state machine or a DNF. 
		Similarly, even the problems of finding memoryless policies~\cite{littman1994} or finite-state policies~\cite{meuleau1999} in simple environments is computationally intractable. Finally, in the standard interactive learning model, simple examples (that would be captured by a STRIPS environment, for example) are known to force a learner to explore an exponential number of paths in any reasonable measure of the environment's complexity~\cite[Section 8.6]{kakade2003}.

The key difference between the model learning we propose and these hardness results is that we limit our attention to the learning of STRIPS domain models in a PAC (``Probably Approximately Correct'') sense: we do not aim to learn an accurate action model, only one that is sufficient for finding a safe plan in most cases. We introduce and desribe this PAC-like setting in Section~\ref{learning-sec}.%

	\section{Problem Definition}\label{problemdef-sec}
	The setting we address is a a STRIPS planning domain, represented 
	using the \SAS{} formalism~\cite{backstrom1995complexity}. 
	A planning domain in SAS$+$ is defined by the tuple $\mathcal{D}=\tuple{\mathcal{X}, \mathcal{O}}$, where 
	\begin{itemize}
		\item $\mathcal{X}=\{x_1,\ldots, x_n\}$ is a set of state variables, each associated with a finite domain $Dom(x_i)$. 
		\item $\mathcal{O}$ is a set of actions, where each action $a\in \mathcal{O}$      is defined by a tuple $\tuple{\pre(a), \eff(a)}$, where $\pre(a)$ and $\eff(a)$ are assignments of values to state variables, i.e., a set of assignments of the form $x_i=v$ where $v\in Dom(x_i)$. We refer to $\mathcal{O}$ and its associated sets of preconditions and effects as the {\em action model} of the domain.
	\end{itemize}
	
	A state is also a set of assignments of the form $x_i=v$ 
	such that every variable in $\mathcal{X}$ is assigned a single value from its corresponding domain. As a shorthand notation, if a state $s$ contains an assignment $x_i=v$ we will write $s(x_i)=v$. A planning problem in \SAS{} is defined by the tuple $\Pi=\tuple{\mathcal{D},s_\mathcal{I}, s_\mathcal{G}}$, where 
	$s_\mathcal{I}$ is the initial state and $s_\mathcal{G}$ 
	is a partial assignment of some state variables that defines the goal. A state $s$ is a goal state if $s_\mathcal{G}\subseteq s$. 
	For an action $a$ and a state $s$ we denote by $apply(s,a)$ the state
	resulting from applying $a$ on state $s$. 
	A solution to an \SAS{} planning problem is a {\em plan}, 
	which is a sequence of actions $a_1,\ldots,a_m$ such that  $s_\mathcal{G}\subseteq apply(\ldots apply(s,a_1), a_2)\ldots a_m)$. 
	
	%%%%% up to here
	The key challenge we address in this paper is how to solve a \SAS{} 
	planning problem $\Pi=\tuple{\mathcal{D},s_\mathcal{I}, s_\mathcal{G}}$ without having the action model of $\mathcal{D}$. Instead, a set of {\em trajectories} are assumed to be given. 
	\begin{definition}[Trajectory]
		A trajectory $T=\langle s_1, a_1, s_2, a_2, \ldots, a_{n-1}, s_n\rangle$ is an alternating sequence of states ($s_1,\ldots,s_n$) and actions ($a_1,\ldots,a_n$) that starts and ends with a state. 
	\end{definition}
	A trajectory represents a successful execution of a sequence of actions by the agent. A set of trajectories may be obtained, for example, by monitoring the acting agent when it is controlled manually by a human operator. The states in the given trajectories are assumed to be fully observable. 
	%\roni{Brendan, maybe we can think about what to do if we don't know the action names. Then, we can ``create'' actions, and maybe this will be more compact than the real actions? this can be pretty cool}
	
	Finally, we can define the {\em safe model-free planning} problem, which is the focus of this work. 
	
	\begin{definition}[Safe model-free planning]
		Let $\Pi=\tuple{\tuple{\mathcal{X},\mathcal{O}},s_\mathcal{I}, s_\mathcal{G}}$ be a planning 
		problem and let $\mathcal{T}=\{\mathcal{T}^1,\ldots,\mathcal{T}^m\}$ be a set of trajectories in 
		the planning domain $\tuple{\mathcal{X},\mathcal{O}}$. 
		The input to a safe model-free planning problem is 
		the tuple $\tuple{\mathcal{X},s_\mathcal{I}, s_\mathcal{G}, \mathcal{T}}$
		and the task is to generate a plan $\pi$  that is a solution to $\Pi$. 
		We denote this safe model-free planning problem as $\Pi_\mathcal{T}$. 
		\label{def:model-free-planning}
	\end{definition}

	% 1!!!!!!!!!!!!!
	\section{Conservative Planning}
	%We follow Walsh and Litman's approach
	%We follow Wang's~\cite{wang1994learning,wang1995learning} approach for learning a STRIPS model from observation and interactions. 
	To solve the safe model-free planning problem, we propose to learn a conservative action model, and then use it to find sound plans.
	
	Following prior work on learning action models~\cite{wang1995learning,wang1994learning,walsh2008efficientLearning}, we partition every observed trajectory $\tuple{s_1,a_1,s_2,\ldots,s_{n+1}}\in\mathcal{T}$ into a set of {\em action triplets}, where each action triplet is of the form $\langle s_i, a_i, s_{i+1}\rangle$. 
	Let $\mathcal{T}(a)$ be all the action triplets for action $a$. 
	A state $s$ and $s'$ are called pre- and post-state of $a$, respectively, if there is an action triplet $\tuple{s,a,s'}$. Following Walsh and Littman~\shortcite{walsh2008efficientLearning} and Wang~\shortcite{wang1994learning,wang1995learning}, we observe that from the set of trajectories $\mathcal{T}$ we can ``bound'' the set of predicates in an action's preconditions and effects, as follows. 
	
	%infer the following:
	\begin{align}
	\emptyset & \subseteq \pre(a) \subseteq & \bigcap_{\tuple{s, a, s'}\in \mathcal{T}(a)} s \label{eq:pre} \\
	\bigcup_{\tuple{s, a, s'}\in \mathcal{T}(a)} s'\setminus s &
	\subseteq \eff(a) \subseteq &
	\bigcap_{\tuple{s, a, s'}\in \mathcal{T}(a)} s'
	\label{eq:eff}
	\end{align}
	Equation~\ref{eq:pre} holds because a value assignment $(x_i=v)$ cannot be a precondition of $a$ if it is not in every pre-state of $a$, and thus only value assignments that 
	exists in all the pre-states of $a$ may be preconditions of $a$ (hence $\pre(a)\subseteq \bigcap_{\tuple{s, a, s'}\in \mathcal{T}(a)} s$).     On the other hand, the fact that a state variable $x_i$ happened to have the same value $v$ in all the pre-states of the observed trajectories does not necessarily mean that $(x_i=v)$ is a precondition of $a$. It may even be the case that $a$ has no preconditions at all, and thus the ``lower bound'' on an action's precondition is $\emptyset$. 
	
	Equations~\ref{eq:eff} holds because a value assignment $(x_i=v)$ cannot be an effect of $a$ if it is not in every post-state of $a$, and thus only value assignments that 
	exists in all the post-states of $a$ may be preconditions of $a$ (hence $\eff(a)\subseteq \bigcap_{\tuple{s, a, s'}\in \mathcal{T}(a)} s'$). On the other hand, every variable that has a value in the post-state that is different from the pre-state then it must be an effect (hence $\bigcup_{\tuple{s, a, s'}\in \mathcal{T}(a)} s'\setminus s$).
	%We omit a formal proof of the above discussion for space constraints. 
	We denote the ``upper bound'' of the preconditions by $\pre_\mathcal{T}^U(a)$ 
	and the ``lower bound'' of the effects by $\eff_\mathcal{T}^L(a)$.

	\subsection{Compilation to Classical Planning} 
	Next, we use the bounds in Equations~\ref{eq:pre} and~\ref{eq:eff} to compile a \cmfp{} problem 
	$\Pi_\mathcal{T}=\tuple{\mathcal{X},s_\mathcal{I}, s_\mathcal{G}, \mathcal{T}}$ 
	to a classical \SAS{} problem $F(\Pi_\mathcal{T})$, such that a solution to $F(\Pi_\mathcal{T})$ 
	is a solution to $\Pi_\mathcal{T}$, i.e., it is a solution for the underlying planning problem $\Pi$. 
	$F(\Pi_\mathcal{T})$ is defined as follows. It has exactly the same set of 
	state variables ($\mathcal{X}$), start state ($s_\mathcal{I}$), and goal ($s_\mathcal{G}$) as $\Pi_\mathcal{T}$. The actions of $F(\Pi_\mathcal{T})$ is the set of all the actions seen in an observed trajectory. We denote this set of actions by $A(\mathcal{T})$. 
	The preconditions of an action $a$ in  $F(\Pi_\mathcal{T})$ 
	are defined as the ``upper bound'' estimate given in Equation~\ref{eq:pre} ($\pre_\mathcal{T}^U(a)$) and the effects of $a$ in  $F(\Pi_\mathcal{T})$  
	are defined to be the ''lower bound'' estimate given in 
	Equation~\ref{eq:eff} ($\eff_\mathcal{T}^L(a)$).
	
	\begin{definition}[Safe]
		An action model $\mathcal{O}'$ is safe with respect to an action model $\mathcal{O}$
		if for every state $s$ and action $a$ it holds that (1) if $a$ is applicable in $s$ according to $\mathcal{O}'$ then it is also applicable in $s$ according to $\mathcal{O}$, and (2) applying $a$ to $s$ results in exactly the same state when using either $\mathcal{O}'$ or $\mathcal{O}$. 
		\label{def:safe}
	\end{definition}
	
	\begin{theorem}
		The action model in $F(\Pi_\mathcal{T})$ 
		is safe with respect to the action model of $\Pi$. 
		\label{the:safeness}
	\end{theorem}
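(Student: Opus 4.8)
The plan is to verify the two conditions of Definition~\ref{def:safe} directly, with $\mathcal{O}$ the true action model of $\Pi$ and $\mathcal{O}'$ the action model of $F(\Pi_\mathcal{T})$, in which action $a$'s preconditions are $\pre_\mathcal{T}^U(a) = \bigcap_{\tuple{s,a,s'}\in\mathcal{T}(a)} s$ and its effects are $\eff_\mathcal{T}^L(a) = \bigcup_{\tuple{s,a,s'}\in\mathcal{T}(a)} s'\setminus s$. Fix an arbitrary action $a\in A(\mathcal{T})$ and an arbitrary state $s$ in which $a$ is applicable according to $\mathcal{O}'$, i.e.\ $\pre_\mathcal{T}^U(a)\subseteq s$; every claim is then established for this $s$ and $a$.

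For condition~(1) I would simply invoke the left inclusion of Equation~\ref{eq:pre}, namely $\pre(a)\subseteq \pre_\mathcal{T}^U(a)$, and compose it with $\pre_\mathcal{T}^U(a)\subseteq s$ to get $\pre(a)\subseteq s$; hence $a$ is applicable in $s$ under $\mathcal{O}$ as well. This part is immediate.

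For condition~(2) I would argue variable by variable that $apply(s,a)$ assigns the same value to every $x_i$ under either model. If some assignment $(x_i=v)$ lies in $\eff_\mathcal{T}^L(a)$, then by the left inclusion of Equation~\ref{eq:eff} it also lies in $\eff(a)$, so both models set $x_i:=v$ (and no conflicting assignment to $x_i$ can appear in either effect set, since $\eff_\mathcal{T}^L(a)\subseteq\eff(a)$ and $\eff(a)$ is a consistent partial assignment). If $\eff_\mathcal{T}^L(a)$ constrains nothing about $x_i$, there are two sub-cases: if $\eff(a)$ also constrains nothing about $x_i$, both models leave $x_i$ at $s(x_i)$; the remaining sub-case is $(x_i=v)\in\eff(a)$ while $(x_i=v)\notin\eff_\mathcal{T}^L(a)$, and here I would observe that for every triplet $\tuple{s'',a,s'''}\in\mathcal{T}(a)$ we have $(x_i=v)\notin s'''\setminus s''$, yet $(x_i=v)\in s'''$ because $a$ forces it, so $(x_i=v)\in s''$ for every such pre-state; therefore $(x_i=v)\in \pre_\mathcal{T}^U(a)\subseteq s$, meaning $s(x_i)=v$ already, so $\mathcal{O}$'s effect on $x_i$ is a no-op and the two models again agree. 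Since these cases are exhaustive over variables, $apply(s,a)$ is identical under $\mathcal{O}$ and $\mathcal{O}'$.

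The step I expect to require the most care is this last sub-case: the crux is that an effect can be ``invisible'' in the trajectories exactly when the affected variable already held the target value in every observed pre-state, and one must notice that this same circumstance forces that assignment into $\pre_\mathcal{T}^U(a)$, so applicability under $\mathcal{O}'$ already guarantees it holds in $s$. The other things to be careful about are purely bookkeeping — that $\eff_\mathcal{T}^L(a)$ is a consistent partial assignment (being a subset of $\eff(a)$), and that ``applicable in $s$ according to $\mathcal{O}'$'' is by definition exactly $\pre_\mathcal{T}^U(a)\subseteq s$.
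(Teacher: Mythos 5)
Your proposal is correct and follows essentially the same route as the paper's proof: condition (1) via $\pre(a)\subseteq\pre_\mathcal{T}^U(a)\subseteq s$, and condition (2) by a per-variable case analysis whose crux is that an effect absent from $\eff_\mathcal{T}^L(a)$ must have held in every observed pre-state and therefore lands in $\pre_\mathcal{T}^U(a)$, so applicability already guarantees it in $s$. Your write-up of that last sub-case is in fact a slightly more explicit version of the step the paper states somewhat tersely; no gap.
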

	\begin{proof}
		Let $\mathcal{O}_\Pi$ and $\mathcal{O}_{F(\Pi_\mathcal{T})}$ be the action models of $\Pi$ and $F(\Pi_\mathcal{T})$, respectively. 
		Since $a$ is applicable in $s$ according to $\mathcal{O}_{F(\Pi_\mathcal{T})}$, it means that $\pre_\mathcal{T}^U(a)\subseteq s$ and consequently  $a$ is also applicable in $s$ according to $\mathcal{O}_\Pi$, since $\pre(a)\subseteq \pre_\mathcal{T}^U$ (Equation~\ref{eq:pre}).

		Now, let $s'_\Pi$ be the state resulting from applying $a$ on $s$ according to $\mathcal{O}_\Pi$, and let $v$ denote the value of a state variable $x_i$ in $s'_\Pi$, i.e., $v=s'_\Pi(x_i)$. 
		Since $\eff_\mathcal{T}^L(a)\subseteq \eff(a)$, 
		then according to $\mathcal{O}_{F(\Pi_\mathcal{T})}$ 
		either $(x_i=v)$ is an effect of $a$ 
		or $a$ has no effect on $x_i$. 
		If the former is true then $s'_{F(\Pi_{\mathcal{T}})}(x_i)=v=s'_\Pi(x_i)$. 
		Otherwise, it means that in the observed trajectories, applying $a$ 
		never changed the value of $x_i$, i.e., $x_i$ was equal to $v$ in both the pre-state and post-state. By definition, this means that $(x_i=v)$ is a precondition of $a$ 
		according to $\mathcal{O}_{F(\Pi_\mathcal{T})}$, 
		and thus $s(x_i)=s'_{F(\Pi_\mathcal{T})}(x_i)=v=s'_\Pi(x_i)$. 
		Thus, the effects of $a$ on $s$ will be the same in both action models, and hence a 
		$s'_\Pi=s'_{F(\Pi_\mathcal{T})}$. 
	\end{proof}
	%\roni{Maybe the above is trivial or too wordy. What do you think?}
	\begin{corollary}[Soundness]
		Every solution to $F(\Pi_\mathcal{T})$ is also a solution to $\Pi_\mathcal{T}$
		\label{the:soundness}
	\end{corollary}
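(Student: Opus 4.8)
The plan is to derive the corollary directly from Theorem~\ref{the:safeness} by a straightforward induction on the length of the plan. Let $\pi = a_1,\ldots,a_m$ be any solution to $F(\Pi_\mathcal{T})$. Since $F(\Pi_\mathcal{T})$ and $\Pi$ share the same variable set $\mathcal{X}$, the same initial state $s_\mathcal{I}$, and the same goal $s_\mathcal{G}$, it suffices to show that executing $\pi$ from $s_\mathcal{I}$ under the true action model $\mathcal{O}_\Pi$ is well-defined (each action is applicable when its turn comes) and produces exactly the same state sequence as executing $\pi$ under $\mathcal{O}_{F(\Pi_\mathcal{T})}$. I would also note the minor point that $A(\mathcal{T})\subseteq\mathcal{O}$, since every action appearing in a trajectory is a genuine action of $\mathcal{D}$, so each $a_{i}$ has well-defined behavior under $\mathcal{O}_\Pi$.

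First I would set up the induction hypothesis: after the first $i$ actions, the state $s_i^\Pi$ reached under $\mathcal{O}_\Pi$ equals the state $s_i^{F}$ reached under $\mathcal{O}_{F(\Pi_\mathcal{T})}$. The base case $i=0$ is immediate, as both executions start at $s_\mathcal{I}$. For the inductive step, since $\pi$ is a valid plan in $F(\Pi_\mathcal{T})$, action $a_{i+1}$ is applicable in $s_i^{F}$ according to $\mathcal{O}_{F(\Pi_\mathcal{T})}$; by the induction hypothesis $s_i^{F}=s_i^\Pi$, so part~(1) of Definition~\ref{def:safe} (established for these two models in Theorem~\ref{the:safeness}) gives that $a_{i+1}$ is applicable in $s_i^\Pi$ under $\mathcal{O}_\Pi$, and part~(2) gives $apply(s_i^\Pi,a_{i+1}) = apply(s_i^{F},a_{i+1})$, i.e.\ $s_{i+1}^\Pi = s_{i+1}^{F}$, closing the induction.

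Finally I would conclude by applying the hypothesis at $i=m$: $s_m^\Pi = s_m^{F}$, and since $\pi$ solves $F(\Pi_\mathcal{T})$ we have $s_\mathcal{G}\subseteq s_m^{F} = s_m^\Pi$, so $\pi$ reaches a goal state of $\Pi$ and, by the argument above, never encounters an inapplicable action along the way; hence $\pi$ is a genuine plan for $\Pi$ and therefore a solution to $\Pi_\mathcal{T}$ in the sense of Definition~\ref{def:model-free-planning}. I do not anticipate a real obstacle: all the substance is carried by Theorem~\ref{the:safeness}, and the only thing to be careful about is making the two halves of the notion of ``safe'' (preservation of applicability and preservation of the transition function) act in tandem along the entire trajectory, which the induction handles cleanly.
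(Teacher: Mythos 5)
Your proof is correct and takes the same route as the paper: the paper simply asserts that the corollary is a direct consequence of Theorem~\ref{the:safeness}, and your induction along the plan, invoking both clauses of Definition~\ref{def:safe} at each step, is exactly the argument being left implicit. No gaps.
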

	Corollary~\ref{the:soundness} is a direct result of Theorem~\ref{the:safeness}, 
	and its practical implication is the following algorithm for solving 
	any \cmfp{} problem $\Pi_\mathcal{T}$: compile it to a classical planning problem 
	$F(\Pi_\mathcal{T})$, run an off-the-shelf classical planner, and return the resulting plan. 
	We refer to this algorithm as the {\em conservative model-free planner}. 	
	The {\em conservative model-free planner} is sound, but it is not complete. 
	There can be planning problems 
	that have a solution but the observed trajectories are not sufficient 
	to induce a corresponding compiled planning problem that is solvable. 
	As an extreme example, if we do not receive any observed trajectories, 
	the compiled planning problem will not have any actions in its action model
	and thus will not be able to solve anything. 
	In the next section we show that the required number of trajectories is actually reasonable.

	\begin{figure*}
		\includegraphics[width=\textwidth]{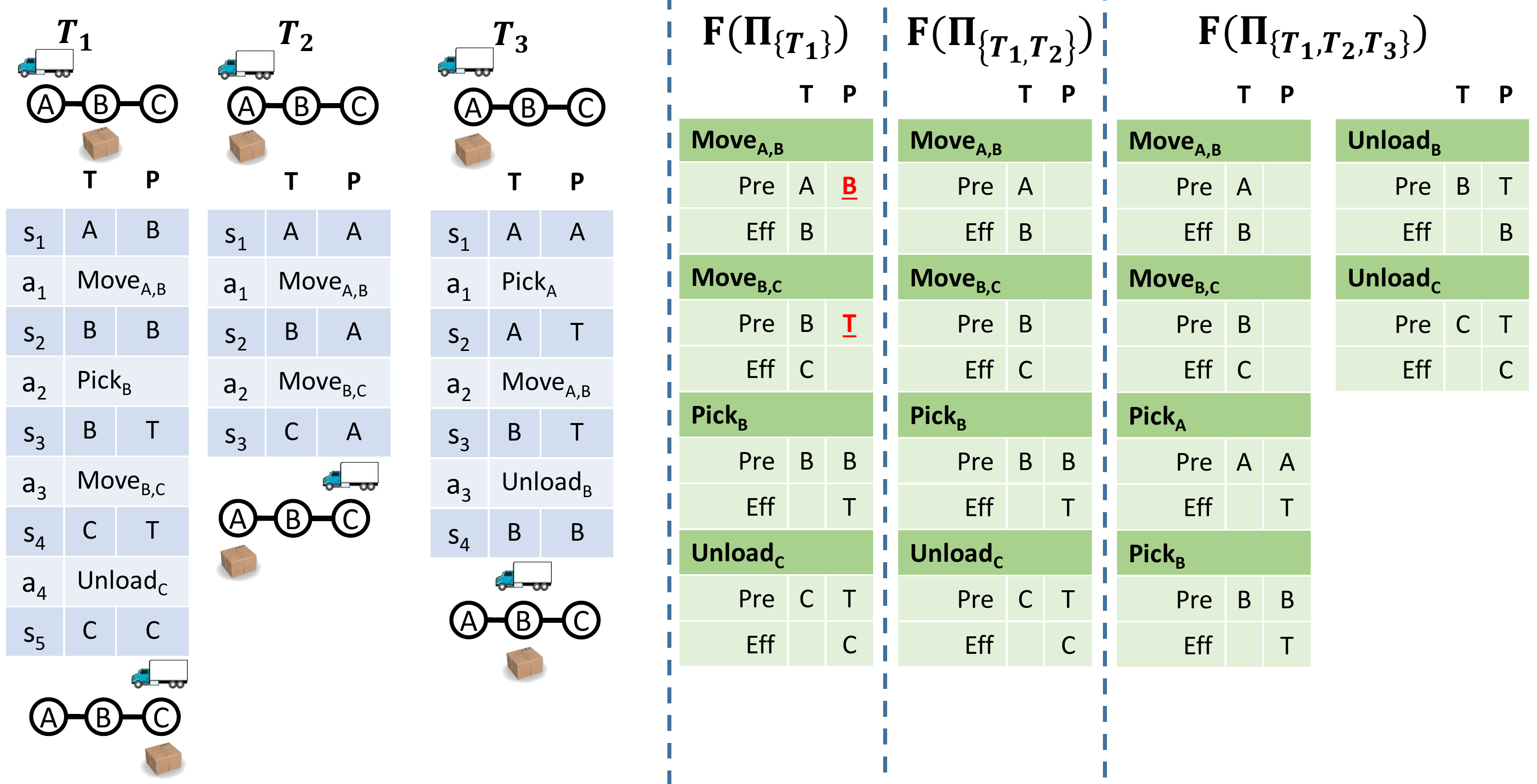}
	\caption{An example of creating $F(\Pi_{\mathcal{T}})$ from observed trajectories in a simple logistics-like domain. The left-hand side of the figure shows the three given trajectories $T_1$, $T_2$, and $T_3$. The right hand side of the figure shows our conservative action model, 
		learned using different subsets of the trajectories $T_1$, $T_2$, and $T_3$ together.}
	\label{fig:example}
	\end{figure*}

	Figure~\ref{fig:example} illustrates how to generate  $F(\Pi_{\mathcal{T}})$ from observed trajectories in a simple logistic-like domain with one truck, one package, and three possible location $A$, $B$, and $C$. The state variables are TruckAt, with domain $A$, $B$, and $C$, and PackageAt with domain $A$, $B$, $C$, and $T$, where $T$ represents that the package is on the truck. The possible actions are Move$_{X,Y}$, 
	Pickup$_X$, and Unload$_X$, for every $X,Y\in \{A,B,C\}$.

	The three tables on the left-hand side, $T_1$, $T_2$, and $T_3$ are three observed trajectories, where the column $T$ represents the value of the state variable TruckAt and the column $P$ represents the value of the state variable PackageAt. For example, $T_1$ represents a trajectory where the truck starts at $A$, moves to $B$, picks up the package, moves to $C$, and unloads the package there. 	The tables on the right-hand side of Figure~\ref{fig:example} show the action model learned from these trajectories. For didactic reasons, we show the action model learned given just $T_1$ ($F(\Pi_{\{T_1\}})$), 
	then the action model learned given $T_1$ and $T_2$ ($F(\Pi_{\{T_1, T_2\}})$), and finally 
	the action model learned using all three trajectories.

	As can be observed, given only $T_1$ we do not have any knowledge of many actions such as Pick$_A$, Pick$_C$, etc. Also, the preconditions learned for the actions Move$_{A,B}$ 
	and Move$_{B,C}$ are too restrictive, requiring that the package is at some location (while clearly a Move action only requires knowing the truck's location). However, given $T_1$ and $T_2$, these redundant preconditions are removed, and thus task that can be achieved with the actions Move$_{A,B}$, Move$_{B,C}$, Pick$_B$, and Unload$_C$ will be found by our conservative model-free planner.

	\section{Learning to Perform Safe Planning}\label{learning-sec}
	%[[Something about the impracticality of the solution above, and how we aim for a more general task, in which we want to maximize probability of success]]
	In general, we cannot guarantee that {\em any finite number of trajectories} will suffice to obtain {\em precisely} the underlying action model. This is because, for example, if some action never appears in a trajectory, we may not know its effects; or, if an action is only used in a limited variety of states, it may be impossible to distinguish its preconditions. Consequently, we cannot guarantee a complete solution to the model-free safe planning problem. However, as the number of trajectories increases, we can hope to learn enough of the actions accurately enough to be able to find plans for most goals in practice. This gives raise to a statistical view of the model-free safe planning task (Definition~\ref{def:model-free-planning}) that follows the usual statistical view of learning, along the lines of Vapnik and Chervonenkis~\shortcite{vapnik1971} and Valiant~\shortcite{valiant1984}.  %is:
	%In this work, we follow the usual statistical view of learning, along the lines of Vapnik and Chervonenkis~\shortcite{vapnik1971} and Valiant~\shortcite{valiant1984}. The main task we consider is:
	
	\begin{definition}[Safe Model-Free Learning-to-Plan] We suppose that there is an arbitrary, unknown (prior) probability distribution $D$ over triples of the form $\tuple{s_\mathcal{I}, s_\mathcal{G}, T}$, where $s_\mathcal{I}$ is a state, $s_\mathcal{G}$ is a goal condition, and $T$ is a trajectory that starts in $s_\mathcal{I}$ and ends in a state that satisfies $s_\mathcal{G}$, 
		and all trajectories are applicable in a fixed planning domain $\mathcal{D}=\tuple{\mathcal{X}, \mathcal{O}}$. 
		%defined for a set of propositions $P$ and set of actions $A$ with a fixed action model. 
		In the safe model-free learning-to-plan task, we are given a set of triplets $\tuple{s_\mathcal{I}^{(1)},s_\mathcal{G}^{(1)},T^{(1)}},\ldots,\tuple{s_\mathcal{I}^{(m)},s_\mathcal{G}^{(m)},T^{(m)}}$ drawn independently from $D$, 
		and a new \SAS{} planning problem $\Pi=\tuple{\mathcal{D}, s_\mathcal{I}, s_\mathcal{G}}$  
		such that the initial state and goal condition $(s_\mathcal{I},G)$ are from some $(s_\mathcal{I},s_\mathcal{G},T)$ drawn from $D$. 
		%The task is to either to output a plan for $\Pi$ or, with probability at most $\epsilon$, assert that no plan can be found.
		The task is to either output a plan for $\Pi$ or, with probability at most $\epsilon$, return that no plan was found.
		%In the model-free planning task, $(I^{(1)},G^{(1)},T^{(1)}),\ldots,(I^{(m)},G^{(m)},T^{(m)})$ have been drawn independently from $D$, and we are given as input $T^{(1)},\ldots,T^{(m)}$, and a new initial state and goal $(I,G)$ for some $(I,G,T)$ drawn from $D$. We must either output a plan for $\langle P,A,I,G\rangle$ or, with probability at most $\epsilon$, assert that no plan can be found.
		\label{def:learning-to-plan}
	\end{definition}

	\subsubsection{Remarks on the task formulation}
	%\MEMO{Roni: I think that since $I$ and $G$ are drawn from some $(I,G,T)$ drawn from $D$ then $(I,G)$ must be a solveable instance. So we should say that the learned action model is sufficient to generate a solution for  the planning problem $(I,G)$ with probability at least $1-\epsilon$. I don't think we can say something about the probability that the problem is not solvable  as we did not have cases like that in the training set  and having such cases means the traninig set used a different  distribution than reality}
	%\MEMO{Roni: Do we really need the distribution of $(I,G,T)$  and cannot assume only a distribution of ($I,G$)?  I think it will be cleaner to only talk about $D$ inthe proof of Theorem 1  and don't have it in the definition or theorem.}
	We stress that $D$ is arbitrary, and thus  the conditional distribution
	of trajectories given a start and goal state, $D(T|s_{\mathcal{I}},s_{\mathcal{G}})$, can also be any arbitrary distribution. For example, $D$ could be the distribution of trajectories obtained
	by running some (unknown, sophisticated) planning algorithm on input $(s_{\mathcal{I}},s_{\mathcal{G}})$, or produced by hand by a human domain expert. 
	More generally, the adversarial choice of $D$ in our model subsumes a model in which a trajectory $T(s_{\mathcal{I}},s_{\mathcal{G}})$ is nondeterministically and adversarially chosen to
	be paired with the start state $s_{\mathcal{I}}$ and goal $s_{\mathcal{G}}$. 
	(Indeed, the distributions used in such a case satisfy the stronger restriction that $D(T|s_{\mathcal{I}},s_{\mathcal{G}})$ produces a deterministic outcome $T(s_{\mathcal{I}},s_{\mathcal{G}})$, which does not necessarily hold in our model.)
	%We note that this task is conservative in the sense that when an algorithm solving it outputs a plan, the plan is required to be correct. The algorithm may only make errors by failing to find a plan when one exists. %\section{Theoretical Guarantees and Limitations} How many trajectories do we need to guarantee that every plan can be generated?
	
	We also note that our conservative model-free planner does not actually require knowledge of the goals $s_{\mathcal{G}}$ associated with the trajectories drawn from $D$. Thus, our approach actually solves a more demanding task that does not provide the goals to the learning algorithm. But, such a distribution over goals is nevertheless a central feature in our notion of ``approximate completeness,'' and features prominently in the analysis as we discuss next.
	
	\subsubsection{Analysis of learning}
	A key question is how our certainty that a plan can be generated for a new start and goal state grows with the number of trajectories. Let $\log$ and $\ln$ denote $\log_2$ and $\log_e$, respectively, and let $d$ denote the largest number of values
	for a state variable. 
	%Since there are $n$ state variables each with at most $d$ values, 
	\begin{theorem}
		Using the conservative model-free planner, 
		it is sufficient to observe 
		%$m>\frac{(2ln 2)|A|}{\epsilon}(|P|+\log |A|+\log\frac{2}{\delta})$
		$m\geq\frac{(2\ln  d)|A|}{\epsilon}(|\mathcal{X}|+\log\frac{2|A|}{\delta})$
		trajectories to solve the safe model-free learning-to-plan problem with probability $1-\delta$. 
		\label{the:pac-conformant}
	\end{theorem}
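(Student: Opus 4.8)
The plan is to recast Theorem~\ref{the:pac-conformant} as a union bound over the actions of the textbook realizable ``cardinality'' PAC argument applied to learning each action's precondition. The first step reduces ``the conservative planner succeeds on the test instance $\tuple{s_\mathcal{I},s_\mathcal{G},T}$'' to a replay condition: suppose that for every occurrence $\tuple{s_j,a_j,s_{j+1}}$ along $T$ we have $a_j\in A(\mathcal{T})$ and $\pre_\mathcal{T}^U(a_j)\subseteq s_j$. An easy induction using Theorem~\ref{the:safeness} then shows that executing $a_1,\ldots,a_{n-1}$ from $s_\mathcal{I}$ in $F(\Pi_\mathcal{T})$ is possible and passes through exactly $s_1,\ldots,s_n$, so $F(\Pi_\mathcal{T})$ is solvable and an off-the-shelf classical planner returns a plan. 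Hence the planner fails only when the drawn trajectory $T$ contains an occurrence of an action never observed, or an occurrence violating its over-estimated precondition $\pre_\mathcal{T}^U$. It therefore suffices to show that, with probability at least $1-\delta$ over the $m$ training trajectories, the $D$-probability of drawing such a ``bad'' instance is at most $\epsilon$.

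Fix an action $a$ and treat its precondition as a concept to be learned. For the $i$-th training trajectory set $\sigma_a^{(i)}=\bigcap\{s:\tuple{s,a,s'}\in T^{(i)}\}$ when $a$ occurs in $T^{(i)}$; these are i.i.d.\ (a deterministic function of the i.i.d.\ $T^{(i)}$), and by Equation~\ref{eq:pre} and the definition of $F$, $\pre_\mathcal{T}^U(a)$ is their intersection. Let $\mathcal{H}_a$ be the class of partial assignments over $\mathcal{X}$, so $|\mathcal{H}_a|\le(d+1)^{|\mathcal{X}|}$. With ``$a$ is applicable'' as an always-positive label, a hypothesis $h\in\mathcal{H}_a$ is consistent with $\sigma_a^{(i)}$ exactly when $h\subseteq\sigma_a^{(i)}$; thus $\pre_\mathcal{T}^U(a)$ is consistent with all the draws, the true $\pre(a)$ realizes the concept (it is contained in every pre-state, hence in every $\sigma_a^{(i)}$ and in a fresh one), and a ``bad'' test instance for $a$ in which $a$ is observed is precisely a mistake of $\pre_\mathcal{T}^U(a)$ on a fresh draw. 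The realizable cardinality bound then gives, for any $\epsilon'>0$, that $\pre_\mathcal{T}^U(a)$ has error more than $\epsilon'$ with probability at most $|\mathcal{H}_a|(1-\epsilon')^m\le(d+1)^{|\mathcal{X}|}e^{-\epsilon'm}$.

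Finally I would set $\epsilon'=\epsilon/|A|$ and handle in parallel: (a) the event that some action with $D$-probability exceeding $\epsilon/|A|$ is never observed --- a one-line Chernoff-plus-union bound --- and (b) the event that some observed action's estimated precondition has error exceeding $\epsilon/|A|$ --- the bound above, summed over the at most $|A|$ actions. Requiring each of (a) and (b) to fail with probability at most $\delta/2$ (hence each action at most $\delta/(2|A|)$) forces $m\ge\frac{|A|}{\epsilon}(|\mathcal{X}|\ln(d+1)+\ln\frac{2|A|}{\delta})$; when neither event occurs, a last union bound over the actions gives total bad-instance probability at most $|A|\cdot\frac{\epsilon}{|A|}=\epsilon$. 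The stated inequality is then recovered via the cosmetic estimates $\ln(d+1)\le2\ln d$ and $\ln 2\le2\ln d$ for $d\ge2$, together with $\ln x=(\ln 2)\log x$.

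I expect the genuine content to lie entirely in setting up this reduction rather than in any calculation. The delicate points are: (i) arguing that taking one intersected example $\sigma_a^{(i)}$ per trajectory is the right move --- it keeps the samples independent, whereas distinct occurrences of $a$ within a single trajectory are correlated, and it costs nothing because $\pre_\mathcal{T}^U$ is the intersection over all observations anyway; (ii) applying Theorem~\ref{the:safeness} at precisely the right place, so that analysing preconditions alone still captures the failure mode in which an under-observed effect of $a$ masquerades as a spurious precondition that blocks a legitimate application; and (iii) the ``action never observed'' case, which is strictly worse than an empty precondition and is the reason the confidence is split $2|A|$ ways rather than $|A|$ ways. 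Everything past that is the standard Occam-style argument.
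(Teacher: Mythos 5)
Your proposal is correct and reaches the stated bound, but it reorganizes the heart of the argument. The skeleton is shared with the paper: a replay step showing the planner succeeds whenever every occurrence $\tuple{s,a,s'}$ on the drawn trajectory has $a\in A(\mathcal{T})$ and $\pre_\mathcal{T}^U(a)\subseteq s$ (the paper's Lemma~\ref{lem:plan-existance}, stated more tersely there), plus a split of the failure probability between unobserved actions and over-tight learned preconditions, each controlled by $(1-p)^m\leq e^{-pm}$ and a union bound. Where you genuinely diverge is in how the precondition half is bounded. The paper defines a global property of the whole learned model ($\epsilon$-adequacy, Definition~\ref{def:adequate}), allocates $\epsilon/2$ to it and $\epsilon/(2|A|)$ to each rare action, and in Lemma~\ref{lem:action-model-ok} union-bounds over all $d^{|\mathcal{X}|\cdot|A|}$ \emph{joint} precondition assignments of the action model at once. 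You instead run the textbook realizable cardinality (Occam) bound \emph{per action}, over the $(d+1)^{|\mathcal{X}|}$ partial assignments for that action, with error budget $\epsilon/|A|$ and confidence $\delta/(2|A|)$ each, and only then sum over actions. The two routes land on the same sample complexity up to constants; yours maps more directly onto the standard PAC argument, is slightly tighter in the $|\mathcal{X}|$ term ($\ln(d+1)$ versus $2\ln d$), and is more careful on a small point the paper elides --- a precondition is a \emph{partial} assignment, so the per-action hypothesis count is $(d+1)^{|\mathcal{X}|}$ rather than $d^{|\mathcal{X}|}$ --- while the paper's version buys a single named property of the learned model that composes cleanly with its Lemma~\ref{lem:plan-existance}. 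Your observation (i), that one should take one intersected sample per trajectory to preserve independence across draws from $D$, is a real subtlety that the paper handles implicitly by defining adequacy at the level of whole trajectories; your treatment makes it explicit.
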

	%\begin{proof}
	%\MEMO{Question \#1: There is a conceptual hop that I am missing. I understand that there is a distribution of start-goal pairs. But I don't understand the meaning of a distribution over trajectories. A trajectory is not something that follows some distribution, it is something that is computed by some planning process.}
	{\bf Proof Outline.} First, Lemma~\ref{lem:sufficientActions} shows 
		that the set of actions used by our conservative model-free planner ($A(\mathcal{T})$) 
		is sufficient to solve a randomly drawn problem with high probability. 
		Then, Lemma~\ref{lem:plan-existance} shows that 
		under certain conditions, the preconditions we learned for these actions ($\pre_\mathcal{L}^U$) 
		are not too conservative, i.e., they are adequate
		for finding a plan with high probability.  
		Finally, we prove that with high probability these conditions over the action model we learned 
		indeed hold.

		% (Lemma~\ref{lem:sufficientActions}). Then, we show that under some conditions over the action model we learn is sufficient (i.e., they are not too conservative)t o enable finding a plan with high probability (Lemma~\ref{lem:plan-existance}). Finally, we show that these conditions indeed hold when using  conservative model-free planning. %although the set of preconditions we assume is conservative, it is still sufficient to enable finding a plan with high probability. 
		
		\begin{lemma}
			Let $A_\epsilon$ be the set of actions such that each action $a\in A_\epsilon$ appears in a trajectory sampled from $D$ with probability at least $\frac{\epsilon}{2|A|}$ and let $A(\mathcal{T})$ be the set of every action that appeared in a trace. 
			The probability that all the actions $a\in A_\epsilon$ 
			appear in $A(\mathcal{T})$ is at least $1-\delta/2$.
			%The probability that any action $a\in A_\epsilon$ does not appear in $A(\mathcal{T})$ is at most $\delta/2$.
			\label{lem:sufficientActions}
		\end{lemma}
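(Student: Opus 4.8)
The plan is to prove this by a union bound over the ``frequent'' actions combined with an exponential tail estimate for the event that a fixed frequent action is missed by all $m$ sampled trajectories. First I would fix an arbitrary action $a\in A_\epsilon$. By the defining property of $A_\epsilon$, a single trajectory $T$ drawn from $D$ contains $a$ with probability at least $\epsilon/(2|A|)$, so the probability that $T$ omits $a$ is at most $1-\epsilon/(2|A|)$. Since the $m$ trajectories in $\mathcal{T}$ are drawn independently from $D$, the events ``the $i$-th trajectory omits $a$'' are independent, and hence $\Pr[a\notin A(\mathcal{T})]\le(1-\epsilon/(2|A|))^m$, which by $1-x\le e^{-x}$ is at most $e^{-m\epsilon/(2|A|)}$.

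Next I would apply a union bound. Because $A_\epsilon\subseteq A$, there are at most $|A|$ actions in $A_\epsilon$, so the probability that at least one action of $A_\epsilon$ fails to appear in $A(\mathcal{T})$ is at most $|A|\cdot e^{-m\epsilon/(2|A|)}$. It then remains to check that the value of $m$ guaranteed by Theorem~\ref{the:pac-conformant} makes this at most $\delta/2$. This is where the hypothesis on $m$ enters: since $d\ge 2$ we have $\ln d\ge\ln 2$, and $\log(2|A|/\delta)=\ln(2|A|/\delta)/\ln 2$, so
\[
\frac{(2\ln d)|A|}{\epsilon}\Bigl(|\mathcal{X}|+\log\tfrac{2|A|}{\delta}\Bigr)\;\ge\;\frac{(2\ln d)|A|}{\epsilon}\log\tfrac{2|A|}{\delta}\;\ge\;\frac{2|A|}{\epsilon}\ln\tfrac{2|A|}{\delta}.
\]
Hence $m\ge\frac{2|A|}{\epsilon}\ln\frac{2|A|}{\delta}$, which rearranges to $e^{-m\epsilon/(2|A|)}\le\delta/(2|A|)$, and multiplying by $|A|$ gives $|A|\,e^{-m\epsilon/(2|A|)}\le\delta/2$. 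Taking the complement, all actions of $A_\epsilon$ appear in $A(\mathcal{T})$ with probability at least $1-\delta/2$, as claimed.

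I do not anticipate a serious obstacle here: this is the standard ``coupon-collector via union bound'' estimate. The only point needing care is bookkeeping --- the bound on $m$ in Theorem~\ref{the:pac-conformant} is driven by the needs of the precondition-learning step (Lemma~\ref{lem:plan-existance}) and carries the extra factors $\ln d$ and $|\mathcal{X}|$ that this lemma does not really need, so one just confirms that this larger $m$ still comfortably exceeds the weaker threshold $\frac{2|A|}{\epsilon}\ln\frac{2|A|}{\delta}$ that the coupon-collector bound demands. An alternative presentation would be to state and prove the lemma for any $m\ge\frac{2|A|}{\epsilon}\ln\frac{2|A|}{\delta}$ and then invoke it inside the proof of the theorem.
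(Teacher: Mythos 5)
Your proposal is correct and follows essentially the same argument as the paper's proof: a per-action tail bound $(1-\epsilon/(2|A|))^m\le e^{-m\epsilon/(2|A|)}$, the observation that the stated $m$ dominates the weaker threshold $\frac{(2\ln 2)|A|}{\epsilon}\log\frac{2|A|}{\delta}=\frac{2|A|}{\epsilon}\ln\frac{2|A|}{\delta}$, and a union bound over the at most $|A|$ actions in $A_\epsilon$. Your handling of the $\ln d$ versus $\ln 2$ bookkeeping is in fact slightly more explicit than the paper's.
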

		\begin{proof}
			By definition, the probability that an action $a\in A_\epsilon$ does not exist in a trajectory drawn from $D$ is $1-\frac{\epsilon}{2|A|}$. 
			Since the observed trajectories $\mathcal{T}$ are drawn independently from $D$ we have that the probability that $a\notin A(\mathcal{T})$ is
			$(1-\frac{\epsilon}{2|A|})^m\leq e^{-\frac{\epsilon\cdot m}{2|A|}}$, 
			using the inequality $1-x\leq e^{-x}$. 
			Since we assume in Theorem~\ref{the:pac-conformant} that $m\geq\frac{(2\ln d)|A|}{\epsilon}(|\mathcal{X}|+\log\frac{2|A|}{\delta})$
			which is larger than $\frac{(2\ln 2)|A|}{\epsilon}\log\frac{2|A|}{\delta}$, 
			we have that the probability that $a\notin A(\mathcal{T})$ is at most 
			\begin{equation}
			e^{-\frac{\epsilon\cdot m}{2|A|}}
			\leq e^{-\frac{\epsilon}{2|A|} \cdot \frac{(2\ln 2)|A|}{\epsilon}\log\frac{2|A|}{\delta}}
			=e^{-\ln\frac{2|A|}{\delta}}=\frac{\delta}{2|A|}
			\end{equation}
			Hence, by a union bound over $a\in A_\epsilon$ (noting $|A_\epsilon|\leq |A|$), we have that $A_\epsilon\subseteq A(\mathcal{T})$ with probability $1-\delta/2$ as needed.
		\end{proof}
		
		Stated informally, Lemma~\ref{lem:sufficientActions} says that with high probability we will observe
		all the ``useful'' actions, i.e., the actions used in many trajectories. However, we may have learned 
		preconditions for these actions that are too conservative, 
		preventing the planner from finding a plan even if one exists. 
		We next define a property of action models that states that this does not occur frequently. 
		
		% the execution of actions in states where they in fact can be executed. Due to Theorem~\ref{the:safeness} we know that observed actions will be associated with a safe action model. Still, the preconditions learned for the observed actions may be too restrictive, i.e., prevent the execution of actions in states where they in fact can be executed. 
		
		\begin{definition}[Adequate]
			We call an action model {\em $\epsilon$-adequate} if, with probability at most $\epsilon/2$, we sample a trajectory $T$ from $D$ such that $T$ contains an action triplet $\tuple{s,a,s'}$  where $a\in A_\epsilon$ and $s$ does not satisfy $\pre_\mathcal{T}^U$. 
			%We call an action model {\em inadequate} if, with probability at least $\epsilon/2$  we sample a trajectory $T$ from $D$ such that $T$ contains an action triplet $\tuple{s,a,s'}$  where $a\in A_\epsilon$ and $s\nsubseteq\pre_\mathcal{T}^U$. 
			\label{def:adequate}
		\end{definition}
		We say that an action model is {\em $\epsilon$-inadequate} if it is not $\epsilon$-adequate. 
		An equivalent way to define the notion of an {\em $\epsilon$-adequate} action model is
		that with probability at most $\epsilon/2$ 
		a trajectory is sampled in which an action $a\in A_\epsilon$ is invoked on a state that 
		does not satisfy the conservative preconditions of $a$ we learned from the given set of trajectories ($\mathcal{T}$).

		\begin{lemma}
			If the learned action model is $\epsilon$-adequate and $A_\epsilon\subseteq A(\mathcal{T})$, 
			then with probability $1-\epsilon$ our conservative model-free planner will find a plan for a start-goal pair $(s_\mathcal{I},s_\mathcal{G})$ sampled from $D$.
			\label{lem:plan-existance}
		\end{lemma}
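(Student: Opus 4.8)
The idea is to use the trajectory $T$ that is drawn from $D$ together with the pair $(s_\mathcal{I},s_\mathcal{G})$ as a hidden ``witness'' plan. The conservative planner never sees $T$, but it suffices to show that, with probability at least $1-\epsilon$ over the fresh draw, the action sequence underlying $T$ is a legal plan in the compiled problem $F(\Pi_\mathcal{T})$: then $F(\Pi_\mathcal{T})$ is solvable, a complete classical planner run on it returns \emph{some} plan, and by Corollary~\ref{the:soundness} that plan solves $\Pi$, so the conservative model-free planner succeeds.

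First I would bound the ``bad'' events over the draw of $(s_\mathcal{I},s_\mathcal{G},T)$ from $D$. Write $T$ as the triplets $\tuple{s_1,a_1,s_2},\ldots,\tuple{s_{n-1},a_{n-1},s_n}$ with $s_1=s_\mathcal{I}$ and $s_\mathcal{G}\subseteq s_n$ (the latter holds by the definition of $D$). Call $T$ \emph{good} if (i) every $a_i$ lies in $A_\epsilon$, and (ii) $\pre_\mathcal{T}^U(a_i)\subseteq s_i$ for every $i$. For (i): each action in $A\setminus A_\epsilon$ appears in a $D$-trajectory with probability less than $\epsilon/2|A|$, so a union bound over the at most $|A|$ such actions bounds the probability that $T$ uses any of them by $\epsilon/2$. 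For (ii): conditioned on (i), all of $T$'s actions are in $A_\epsilon$, so by $\epsilon$-adequacy (Definition~\ref{def:adequate}) the probability that some triplet $\tuple{s_i,a_i,s_{i+1}}$ of $T$ has $a_i\in A_\epsilon$ but $s_i$ failing $\pre_\mathcal{T}^U(a_i)$ is at most $\epsilon/2$. A union bound on the complements of (i) and (ii) then gives that $T$ is good with probability at least $1-\epsilon$; moreover, since $A_\epsilon\subseteq A(\mathcal{T})$, whenever (i) holds all of $T$'s actions are actually present in $F(\Pi_\mathcal{T})$.

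Next I would show that a good $T$ is executable in $F(\Pi_\mathcal{T})$ from $s_\mathcal{I}$, by induction on $k$ with the claim: applying $a_1,\ldots,a_{k}$ to $s_\mathcal{I}$ in $F(\Pi_\mathcal{T})$ yields exactly $s_{k+1}$. The base case is $s_1=s_\mathcal{I}$. For the step, since $\tuple{s_k,a_k,s_{k+1}}$ comes from a real execution, $a_k$ is applicable at $s_k$ in the true model and produces $s_{k+1}$; by goodness $\pre_\mathcal{T}^U(a_k)\subseteq s_k$, so $a_k$ is applicable at $s_k$ in $F(\Pi_\mathcal{T})$, and by the safeness of the learned model (Theorem~\ref{the:safeness}) it produces the same successor, namely $s_{k+1}$. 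Hence executing all of $T$ reaches $s_n\supseteq s_\mathcal{G}$, so $T$ is a plan for $F(\Pi_\mathcal{T})$, completing the argument.

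The main obstacle I anticipate is keeping straight \emph{what is random and what is fixed}: the learned model — and hence $\pre_\mathcal{T}^U$, $A(\mathcal{T})$, and even $A_\epsilon$ — is determined by the already-observed training trajectories and is fixed here, while the only fresh randomness is the new draw $(s_\mathcal{I},s_\mathcal{G},T)\sim D$; both $\epsilon/2$ bounds (the rare-action bound and the adequacy bound) are statements about this single fresh draw, so they combine by a plain union bound and no independence is needed. A secondary point to state carefully is that we only need the \emph{existence} of a legal plan in $F(\Pi_\mathcal{T})$ — we invoke the classical planner's completeness, and Theorem~\ref{the:safeness} to align compiled successor states with the observed ones — not that the planner recovers $T$ itself.
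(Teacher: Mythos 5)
Your proposal is correct and follows essentially the same route as the paper's proof: the same decomposition into an $\epsilon/2$ bound on $T$ using an action outside $A_\epsilon$ and an $\epsilon/2$ bound from $\epsilon$-adequacy, combined by a union bound over the single fresh draw. The only difference is that you spell out explicitly (via the induction and the appeal to Theorem~\ref{the:safeness} and Corollary~\ref{the:soundness}) why a ``good'' $T$ is executable in $F(\Pi_\mathcal{T})$, a step the paper compresses into ``our planner could at least find $\mathcal{T}$.''
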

		\begin{proof}
			Let $T$ be the (unknown) trajectory sampled for $(s_\mathcal{I},s_\mathcal{G})$. 
			The probability that $T$ uses an action that is not in 
			$A_\epsilon$ is at most $|A|\cdot\frac{\epsilon}{2|A|}=\epsilon/2$. 
			Thus, $T$ contains only actions known to our planner with probability at least $1-\epsilon/2$, since we assumed that $A_\epsilon\subseteq A(\mathcal{T})$. 
			Since the action model is $\epsilon$-adequate then with probability $1-\epsilon/2$ the learned preconditions are satisfied on all of the states in $T$. Thus, by a union bound, we find that with probability $1-\epsilon$, our planner could at least find $\mathcal{T}$. Hence it will find a trajectory from $s_\mathcal{I}$ to $s_\mathcal{G}$, as required. 
		\end{proof}
		
		\begin{lemma}
            The action model used by our conservative model-free planner is $\epsilon$-adequate with probability at least $1-\delta/2$.
			\label{lem:action-model-ok}
		\end{lemma}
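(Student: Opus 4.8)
The plan is to reduce the statement to a per-action claim and then, for each action, to union-bound over the finitely many preconditions the learner could possibly end up with. By Definition~\ref{def:adequate} it suffices to control, for each $a\in A_\epsilon$, the probability over $\mathcal{T}$ that the learned precondition $\pre_\mathcal{T}^U(a)$ is what I will call \emph{$a$-bad}: $\Pr_{T\sim D}\!\left[\,T\text{ contains a triplet }\tuple{s,a,s'}\text{ with }s\not\supseteq\pre_\mathcal{T}^U(a)\,\right]>\frac{\epsilon}{2|A|}$. Indeed, if for no $a\in A_\epsilon$ the learned precondition is $a$-bad, then by a union bound over the at most $|A|$ actions of $A_\epsilon$ the probability that a trajectory drawn from $D$ witnesses the event in Definition~\ref{def:adequate} is at most $|A|\cdot\frac{\epsilon}{2|A|}=\frac{\epsilon}{2}$, i.e.\ the model is $\epsilon$-adequate. (For $a\in A_\epsilon\setminus A(\mathcal{T})$ there is nothing to learn; taking $\pre_\mathcal{T}^U(a)=\emptyset$, which every state satisfies and which is therefore never $a$-bad, such actions never contribute, so I only need to worry about $a\in A_\epsilon\cap A(\mathcal{T})$.) So the target per-action failure probability is $\frac{\delta}{2|A|}$.

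The crux is the following observation, which uses only that the learned precondition $\pre_\mathcal{T}^U(a)=\bigcap_{\tuple{s,a,s'}\in\mathcal{T}(a)}s$ is a \emph{deterministic} function of the observed pre-states of $a$ (it does not use Theorem~\ref{the:safeness}): for any fixed $a$-bad partial state $p$,
\begin{equation}
\Pr_{\mathcal{T}\sim D^m}\!\left[\pre_\mathcal{T}^U(a)=p\right]\;\le\;\left(1-\frac{\epsilon}{2|A|}\right)^{m}\;\le\;e^{-\epsilon m/(2|A|)}.
\label{eq:badpre}
\end{equation}
The reason: if $\pre_\mathcal{T}^U(a)=p$, then every observed triplet of $a$ has pre-state $\supseteq p$, so \emph{none} of the $m$ independently sampled trajectories contains a triplet $\tuple{s,a,s'}$ with $s\not\supseteq p$; but since $p$ is $a$-bad, each sampled trajectory independently contains such a triplet with probability exceeding $\frac{\epsilon}{2|A|}$, and (\ref{eq:badpre}) follows. (The empty partial state is never $a$-bad, so an $a$-bad $p$ forces $\mathcal{T}(a)\neq\emptyset$ and no degenerate case arises.)

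The remaining step is to sum (\ref{eq:badpre}) over all candidate preconditions: a partial state over $\mathcal{X}$ assigns each variable one of its at most $d$ values or leaves it unassigned, so there are at most $(d+1)^{|\mathcal{X}|}$ of them, whence $\Pr_{\mathcal{T}}\!\left[\pre_\mathcal{T}^U(a)\text{ is }a\text{-bad}\right]\le (d+1)^{|\mathcal{X}|}e^{-\epsilon m/(2|A|)}$. Setting this $\le\frac{\delta}{2|A|}$ and solving gives $m\ge\frac{2|A|}{\epsilon}\bigl(|\mathcal{X}|\ln(d+1)+\ln\frac{2|A|}{\delta}\bigr)$, and a final union bound over the $\le|A|$ actions of $A_\epsilon$ then yields the probability $1-\frac{\delta}{2}$ claimed. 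I expect the step needing the most care to be matching this with the exact constants in Theorem~\ref{the:pac-conformant}: one must tighten the count $(d+1)^{|\mathcal{X}|}$ to $d^{|\mathcal{X}|}$ (for instance by restricting to preconditions actually realizable as intersections of observed pre-states, or by a coarser absorption of constants valid for $d\ge2$) in order to turn $\ln(d+1)$ into the $\ln d$ and $\ln(2|A|/\delta)$ into the $\log(2|A|/\delta)$ appearing there. The genuinely substantive point is (\ref{eq:badpre}): an overly conservative learned precondition would, with overwhelming probability, have been contradicted by one of the observed trajectories, and this is what makes the required number of trajectories grow only quasi-linearly.
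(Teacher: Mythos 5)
Your proof is correct in substance and rests on the same core observation as the paper's --- a learned precondition that is violated with noticeable probability under $D$ could only have been learned if none of the $m$ i.i.d.\ trajectories contradicted it, an event of probability at most $(1-q)^m$ --- but you organize the union bound differently. The paper works at the level of whole action models: it counts $d^{|\mathcal{X}|\cdot|A|}$ candidate precondition assignments for the entire model, and for each $\epsilon$-inadequate one uses the model-level inadequacy threshold $\epsilon/2$ to get a per-candidate survival probability of $(1-\epsilon/2)^m$. You instead decompose per action, call a candidate precondition $a$-bad when its per-trajectory violation probability exceeds $\epsilon/(2|A|)$, union over the $(d+1)^{|\mathcal{X}|}$ candidates per action and then over the $\le|A|$ actions, and recover $\epsilon$-adequacy of the whole model by one more union bound over $A_\epsilon$. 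The trade is exponentially fewer events in the paper's union bound versus a much smaller per-event probability in yours; both land at $m=\Theta(\frac{|A||\mathcal{X}|\ln d}{\epsilon})$, and your version has the advantage of making explicit \emph{which} object is being unioned over (the finitely many values $\pre^U_\mathcal{T}(a)$ can take), where the paper's "$\mathcal{O}_{BAD}$ can only be learned if none of these pairs were observed" leaves the determinism of the learner implicit. Your flagged constant mismatch is real --- with the count $(d+1)^{|\mathcal{X}|}$ the theorem's stated $m$ falls short by roughly $|\mathcal{X}|\ln\frac{d+1}{d}$ when $d=2$, and the count cannot honestly be tightened to $d^{|\mathcal{X}|}$ since realizable intersections of pre-states do include partial assignments (each variable has $d$ values or is dropped) --- but note that the paper's own count of $d^{|\mathcal{X}|}$ preconditions per action has exactly the same slip in the other direction, so this is a shared constant-factor blemish rather than a gap in your argument. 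Your handling of $a\in A_\epsilon\setminus A(\mathcal{T})$ via the convention $\pre^U_\mathcal{T}(a)=\emptyset$ is a reasonable patch for a corner case the paper does not address (and which is anyway absorbed by Lemma~\ref{lem:sufficientActions} in the final accounting).
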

		%In Theorem~\ref{the:safeness} we established that the action model used by our conservative model-free planner is safe. 
		\begin{proof}
%		Finally, we argue that the action model used by our conservative model-free planner is $\epsilon$-adequate with probability at least $1-\delta/2$. 
		Whether an action model is $\epsilon$-adequate or not depends on the assignment of preconditions to actions. 
		Since there are $|\mathcal{X}|$ state variables each with at most $d$ values, 
		then there are $d^{|\mathcal{X}|}$ possible assignments of preconditions for an individual action
		and a total of $d^{|\mathcal{X}|\cdot |A|}$ possible preconditions assignments for an action model . 
		Let $BAD$ be the subset of these action model preconditions assignments that are not $\epsilon$-adequate. 
		Clearly, $BAD$ has size at most $d^{|\mathcal{X}|\cdot |A|}$.

		Consider a particular assignment of preconditions in an $\epsilon$-inadequate action model $\mathcal{O}_{BAD}$. Since $\mathcal{O}_{BAD}$ is $\epsilon$-inadequate, 
		it has a set of state-action pairs ($s,a$) associated with it such that $a\in A_\epsilon$ and $a$ cannot be applied to $s$ according to $\mathcal{O}_{BAD}$. 
		The action model $\mathcal{O}_{BAD}$ can only be learned by our algorithm 
		if none of these state-action pairs were observed in the given trajectories $\mathcal{T}$. 
		On the other hand, by the definition of inadequacy
		the probability of having a state-action pair from that list 
		in a trajectory drawn from $D$ is at least $\epsilon/2$. 
		Thus, the probability that our algorithm will learn a particular preconditions assignment of an $\epsilon$-inadequate action model is at most $(1-\epsilon/2)^m$.
		Since $m\geq\frac{(2\ln  d)|A|}{\epsilon}(|\mathcal{X}|+\log\frac{2|A|}{\delta})>\frac{2\ln d}{\epsilon}(|A|\cdot |\mathcal{X}|+\log\frac{2}{\delta})$, then $(1-\epsilon/2)^m$ is smaller than 	
		%\begin{equation}
		%e^{-\frac{\epsilon}{2} \cdot m}\leq 
		%e^{-(\ln 2 \cdot (|A||P|+\log\frac{2}{\delta}))}
		%=e^{-\ln 2 \cdot |A||P|}\cdot e^{-\ln  \frac{2}{\delta}}=\frac{\delta}{2^{|A||P|}}
		%\leq e^{-\frac{\epsilon}{2|A|} \cdot \frac{(2\ln 2)|A|}{\epsilon}\log\frac{2|A|}{\delta}}
		%=e^{-\ln\frac{2|A|}{\delta}}=\frac{\delta}{2|A|}
		%\end{equation}
		\begin{equation}
		e^{-\frac{\epsilon}{2} \cdot m}
		\leq 
		e^{-(\ln d \cdot (|A|\cdot |\mathcal{X}|+\log\frac{2}{\delta}))}
		\leq e^{-\ln d \cdot |A|\cdot |\mathcal{X}|}\cdot e^{-\ln  \frac{2}{\delta}}
		%=\frac{\delta}{d^{n\cdot |A|}}
		\end{equation}
		which is at most $\frac{\delta}{2\cdot d^{|\mathcal{X}|\cdot |A|}}$. Thus, by a union bound over this set of inadequate assignments $BAD$, the probability that any inadequate assignment of preconditions could be output is at most $\delta/2$.
		%We finally note that the algorithm cannot produce unsafe preconditions for $A(\mathcal{T})$: indeed, observe that the preconditions are satisfied for every action invoked in every trajectory obtained by the algorithm. Since the algorithm includes all of the literals that are satisfied on all of the states in which that action was invoked, in particular it includes all of the literals that actually appear in the precondition. 
		Thus, with probability $1-\delta$, the algorithm indeed produces an assignment of preconditions that is neither unsafe for $A(\mathcal{T})$ nor inadequate for $A_\epsilon$, as needed.
	\end{proof}
	
	\subsection{Unsolvable Instances}
	
	The implication of Theorem~\ref{the:pac-conformant} is that by observing 
	a number of trajectories that is quasi-linear in the number of actions and the number of state variables, 
	we expect our safe model-free planner to be complete with high probability, in the sense that if a solution exists it will be found. But what if some of the drawn problem instances are not solvable? 

\begin{table}
	\centering
	\begin{tabular}{c|c|c|c}
			& $\plannable (\Pi)$ & $\neg\plannable(\Pi)$ & Priors \\ \hline
		$\solvable(\Pi)$ 		& $1-\epsilon$ & $\epsilon$ & $\mu$ \\
		$\neg\solvable(\Pi)$	& 0 & 1 & $1-\mu$ \\ \hline
		Marginals	& $\mu (1-\epsilon)$ & $1-\mu(1-\epsilon)$ &
	\end{tabular}
\caption{The table shows the priors, conditional probabilities, and marginals obtained with probability $1-\delta$ when using our conservative model-free planner when it is given $m\geq\frac{(2\ln  d)|A|}{\epsilon}(|\mathcal{X}|+\log\frac{2|A|}{\delta})$ trajectories.}
\label{tab:probabilities}	
\end{table}

	\begin{corollary}
		If the probability of drawing a solvable start-goal pair from $D$ is $\mu$ then it is sufficient to 
		observe $m\geq\frac{(2\ln  d)|A|}{\epsilon}(|\mathcal{X}|+\log\frac{2|A|}{\delta})$
		trajectories (of solvable instances) to guarantee that with probability of at least $1-\delta$ our conservative model-free planner will solve a start-goal pair drawn from $D$ with probability at least $\mu\cdot(1-\epsilon)$ 
\label{cor:unsolvable}
	\end{corollary}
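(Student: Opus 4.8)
The plan is to reduce to Theorem~\ref{the:pac-conformant} by conditioning on solvability. Let $D$ now range over start--goal pairs $\tuple{s_\mathcal{I},s_\mathcal{G}}$, together with a trajectory $T$ whenever the pair is solvable, and let $D' = D(\cdot \mid \solvable)$ denote the conditional distribution obtained by restricting $D$ to its solvable support, which by hypothesis carries probability $\mu>0$ and is therefore well defined. The key observation is that the $m$ observed trajectories are, by construction, exactly $m$ independent samples of $\tuple{s_\mathcal{I},s_\mathcal{G},T}$ from $D'$: an instance yields a trajectory only when it is solvable, so ``trajectories of solvable instances drawn from $D$'' and ``triples drawn from $D'$'' denote the same thing, and conditioning on the (per-instance) event of solvability does not disturb the independence of the draws. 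Moreover the sample-size bound $m\geq\frac{(2\ln d)|A|}{\epsilon}(|\mathcal{X}|+\log\frac{2|A|}{\delta})$ is stated purely in terms of $|A|$, $|\mathcal{X}|$, and $d$, which are properties of the fixed domain $\mathcal{D}$ and hence shared by $D$ and $D'$. Thus Theorem~\ref{the:pac-conformant} applies verbatim with $D'$ in place of $D$.

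First I would invoke Theorem~\ref{the:pac-conformant} on $D'$: with probability at least $1-\delta$ over the draw of the $m$ training trajectories, the conservative model-free planner, on a fresh instance distributed as $D'$, returns a valid plan except with probability at most $\epsilon$. Fix any such ``good'' training set. Now draw a fresh instance $\tuple{s_\mathcal{I},s_\mathcal{G}}$ from the \emph{unconditioned} $D$ and split on whether it is solvable. With probability $1-\mu$ the instance is unsolvable; here Corollary~\ref{the:soundness} (soundness) guarantees the planner never outputs an incorrect plan, so this branch contributes $0$ to the probability of producing a correct solution --- this is the bottom row of Table~\ref{tab:probabilities}. With probability $\mu$ the instance is solvable, i.e.\ distributed exactly as $D'$, and conditioned on this the planner returns a valid plan with probability at least $1-\epsilon$ by the previous sentence. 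Combining via the law of total probability, the planner solves the fresh instance with probability at least $\mu(1-\epsilon)$, matching the marginal in Table~\ref{tab:probabilities}; and this holds whenever the training set was ``good,'' an event of probability at least $1-\delta$, which is the claimed conclusion.

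I do not expect a serious obstacle here: the statement is essentially Theorem~\ref{the:pac-conformant} composed with a single conditioning step and the law of total probability. The only points requiring care are (i) verifying that the empirical trajectories really are i.i.d.\ samples from $D'$ rather than from $D$, so that the hypotheses of Theorem~\ref{the:pac-conformant} are met (this is where the remark ``trajectories only arise from solvable instances'' does the work), and (ii) checking that soundness, not merely the $\epsilon$-completeness guarantee, is what lets the unsolvable branch be handled --- the planner may report ``no plan found'' on unsolvable instances without penalty, since those instances were never going to count toward success.
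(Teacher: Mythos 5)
Your proposal is correct and follows essentially the same route as the paper's proof: decompose $Pr(\plannable(\Pi))$ via solvability, use soundness to zero out the unsolvable branch, and apply Theorem~\ref{the:pac-conformant} to get $Pr(\plannable(\Pi)\mid\solvable(\Pi))\geq 1-\epsilon$. Your explicit verification that the training trajectories are i.i.d.\ samples from the conditional distribution $D(\cdot\mid\solvable)$, so that the theorem genuinely applies after conditioning, is a point the paper leaves implicit, but it is a refinement of the same argument rather than a different one.
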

\begin{proof}
	Let $\solvable(\Pi)$ be true or false if  a given planning problem $\Pi$ is solvable or unsolvable, respectively, 
	and let $\plannable(\Pi)$ be true or false if our planner returns a solution to $\Pi$ or not respectively. We aim to bound $Pr(\plannable(\Pi))$. Since our planner is sound, $\plannable(\Pi)\rightarrow \solvable(\Pi)$ and so
	\[ Pr(\plannable(\Pi))= Pr(\plannable(\Pi)\wedge \solvable(\Pi))= Pr(\plannable(\Pi)|\solvable(\Pi))\cdot Pr(\solvable(\Pi)) \]
	According to Theorem~\ref{the:pac-conformant}, 
	$Pr(\plannable(\Pi)|\solvable(\Pi))\geq 1-\epsilon$ 
	and $Pr(\solvable(\Pi))=\mu$ by definition. 
\end{proof}

Table~\ref{tab:probabilities} shows the priors, conditional probabilities, and marginals use by the proof of Corollary~\ref{cor:unsolvable}
The first row shows the probabilities 
	$Pr(\plannable(\Pi)|\solvable(\Pi))$, 
	$Pr(\neg\plannable(\Pi)|\solvable(\Pi))$, abd
	$Pr(\solvable(\Pi))$; 
	the second row shows the probabilities 
	$Pr(\plannable(\Pi)|\neg\solvable(\Pi))$, 
	$Pr(\neg\plannable(\Pi)|\neg\solvable(\Pi))$, and
	$Pr(\neg\solvable(\Pi))$;  	
	and the last row shows the marginal probabilities
	$Pr(\plannable(\Pi))$ and  
	$Pr(\neg\plannable(\Pi))$. 
	
	Corollary~\ref{cor:unsolvable} and Table~\ref{tab:probabilities} are valuable in that they provides a relationship between $\mu$, $\epsilon$, $\delta$, and $m$. Thus, we can increase $m$ to satisfy more demanding values of $\mu$, $\epsilon$, and $\delta$ and different types of error bounds. For example, consider an application that requires bounding, by some $\gamma$, the probability that our planner outputs incorrectly that no plan exists. In other words, an application that requires 
	\[ Pr(\solvable(\Pi)|\neg\plannable(\Pi))\leq \gamma \]
	Using Bayes' rule and the values from Table~\ref{tab:probabilities}, this means that 
	\[ \frac{\epsilon\cdot\mu}{1-(1-\epsilon)\cdot\mu}\leq \gamma \Leftrightarrow  \epsilon\leq \frac{\gamma\cdot (1-\mu)}{\mu\cdot (1+\gamma)}\]
	\noindent Plugging $\frac{\gamma\cdot (1-\mu)}{\mu\cdot (1+\gamma)}$ into the sample complexity instead of $\epsilon$ in Theorem~\ref{the:pac-conformant} will give the required number of trajectories to obtain a bound of  $\gamma$ on the probability of incorrectly outputting that a problem is not solvable. 
	
	\subsection{Limited Planner Capabilities}
	The given trajectories $\mathcal{T}$ are presumably generated by some planning entity. Since planning in general is a hard problem, it may be the case that the planner that generated the given set of trajectories has drawn a solvable problem from $D$ but was just not able to solve it due to memory or time constraints. 
	
	Learning from such a set of trajectories does not enable bounding the probability of solving problems in general. What can be obtained in such cases is to bound the solving capabilities of our conservative model-free planner with respect to the capabilities of the planner that generated the observed trajectories. Thus, instead of having $\mu$ represent the probability that an instance is solvable, we will have $\mu$ represent the probability that an instance is solvable by the original planner. The rest of the analysis follows exactly the same as in the previous section.

	\section{Related Work}\label{relatedwork-sec}
	%This work focuses on a simpler setting in which there is full state observability, actions have deterministic effects, and there are no conditional effects. Jim{\'e}nez et al.~\shortcite{jimenez2012review} review the state-of-the-art in learning action models for this setting and other more complex settings (partial observability and non-deterministic action outcomes).
	
	Our work relates to several well-studied types of problems: planning under uncertainty, reinforcement learning, and domain model learning. 
	\paragraph{Planning under uncertainty.} In common models for planning under uncertainty, 
	such as Markov Decision Problems (MDP) and Partially Observable MDPs (POMDP), the uncertainty stems from the stochastic nature of the world or from imperfect sensors that prevent full observability of the agent's state. Our setting is different in that our uncertainty only stems from not knowing the agent's action model. 
	
	\paragraph{Reinforcement learning.} Reinforcement learning algorithms learn how to act by interacting with the environment. Thus, they are designed for a trial-and-error approach to learn the domain and/or how to plan in it. Our task is to generate a plan that {\em must} work, so a trial-and-error approach is not sufficient. 
	
	\paragraph{Domain model learning.}
	 Most prior work on learning a domain model in general or a STRIPS action model from observed trajectories, such as ARMS~\cite{yang2007learning} and LOCM~\cite{cresswell2013acquiring}, learn approximate models that do not guarantee safety. Hence, such work generally also involves some form of trial-and-error as well, iteratively requesting more example trajectories or interacting directly with the environment to refine the learned model~\cite{mourao2012learning,wang1994learning,wang1994learning,walsh2008efficientLearning,levine2006explanation,jimenez2013integrating}.
	%	While we build on ideas from these prior works, our work is significantly different, as we demand that  the plan generated by the learned model {\em must} reach the goal, thus not allowing any form of trial-and-error. 
	%While trial-and-error is a reasonable approach in some cases, especially if the agent can re-plan during execution, it is unacceptable in other cases, e.g., when failure corresponds to physical harm to the acting agent or the agent has no re-planning capabilities.  
	In addition, most works learn from both positive and negative examples -- observing successful and failed trajectories, while we only require successful trajectories to be provided.

	Another key difference is that unlike our work, most prior works do not provide statistical guarantees on the soundness of the plan generated with their learned model.
	An exception to this is the work of Walsh and Littman~\shortcite{walsh2008efficientLearning}, that also discussed the problem of learning STRIPS operators from observed trajectories and provided theoretical bounds on the sample complexity -- the number of interactions that may fail until the resulting planner is  sound and complete.  By contrast, we do not assume any planning and execution loop and do not allow failed interactions. Hence, we aim for a planning algorithm that is guaranteed to be sound, at the cost of completeness. This difference affects their approach to learning. They attempted to follow an optimistic assumption about the preconditions and effects of the learned actions, in an effort to identify inaccuracies in their action model. By contrast, we are forced to take a pessimistic approach, as we aim for a successful execution of the plan rather than information gathering to improve the action model.

	\section{Conclusions}
	This paper deals with a planning problem in which the planner agent has no knowledge about its actions. Instead of an action model, the planner is given a set of observed trajectories of successfully executed plans. 	In this setting we introduced the {\em safe model-free planning} problem, in which the task is to find a plan that is guaranteed to reach the goal, i.e., there is no tolerance for execution failure. This type of problem is important in cases where failure is costly or in cases where the agent has no capability to replan during execution.

	We showed how to use the given set of trajectories to learn
	about the agent's actions, bounding the set of predicates in the actions'  preconditions and effects. Then, we proposed a conservative approach to solve the safe model-free problem that is based on a translation to a classical planning problem. This solution is sound but is not complete, as it may fail to find a solution even if one exists. However, we prove that under some assumptions the likelihood of finding a solution with this approach grows linearly with the number of predicates and quasi-linearly with the number of actions.

	Future directions for safe model-free planning include studying how to address richer underlying planning models including parametrized actions, conditional effects, stochastic action outcomes, and partial observability. While some of these more complex action models can be compiled away (e.g., a problem with conditional effects can be compiled to a problem without conditional effects~\cite{nebel2000compilability}), the resulting problem can be significantly larger. A particularly interesting direction is how to learn there lifted action model, i.e., 
	what can be learned from a trajectory 
	with an action $a(x)$ on the action 
	model of $a(y)$, where $a$ is a parameterized action and $x$ and $y$ are different values for the same parameter.

	%Another future direction will consider cases where the given planning problem is not solvable.
	%Note that we cannot distinguish between cases where our planner outputted that no solution has been found from cases where indeed there are no solution. Future work can study and analyze the probability of such cases.
	
	\section*{Acknowledgements}
	B.~Juba was partially supported by an AFOSR Young Investigator Award. 
	R.~Stern was partially supported by the Cyber Security Research Center 
	at BGU. 
	
	%% The file named.bst is a bibliography style file for BibTeX 0.99c
	\bibliographystyle{named}
	\bibliography{library}
	
\end{document}